%% file: main.tex
\documentclass[letterpaper, 10 pt, conference]{ieeeconf}  

\usepackage[utf8]{inputenc} 
\usepackage[T1]{fontenc}    
\usepackage{hyperref}       
\usepackage{url}            
\usepackage{booktabs}       
\usepackage{amsfonts}       
\usepackage{nicefrac}       
\usepackage{microtype}      
\usepackage[bottom]{footmisc}
\usepackage{graphicx}
\usepackage{multirow}
\usepackage{amsfonts}
\usepackage{adjustbox}
\usepackage[dvipsnames]{xcolor}

\usepackage{listings}
\usepackage{algorithm}
\usepackage{algpseudocode}

\usepackage{wrapfig}
\usepackage{caption}
\usepackage{subcaption,lipsum}
\usepackage{pifont}
\usepackage{bbm}
\usepackage{multicol}
\usepackage{afterpage}
\usepackage{placeins}
\usepackage{dblfloatfix}
\usepackage[flushleft]{threeparttable} %

\let\labelindent\relax
\usepackage{enumitem}

\usepackage{amsmath}
\usepackage{amssymb}
\usepackage{mathtools}

\usepackage{amsthm}
\theoremstyle{plain}
\newtheorem{theorem}{Theorem}[section]
\newtheorem{proposition}[theorem]{Proposition}

\DeclarePairedDelimiterX{\infdivx}[2]{(}{)}{%
  #1\;\delimsize\|\;#2%
}

\IEEEoverridecommandlockouts                              

\title{\LARGE \bf
FW-NKF: Frequency-Weighted Neural Kalman Filters
}

\author{%
Adnan Harun Dogan$^{*}$,
Berken Utku Demirel,
Christian Holz
\thanks{$\ast$ Corresponding author. 
All authors are with the Department of Computer Science, ETH Zürich, Switzerland$^{1}$.
Code is available at \url{https://github.com/eth-siplab/Frequency-weighted-neural-Kalman-filters}
}
}

\begin{document}

\maketitle
\thispagestyle{empty}
\pagestyle{empty}

\input{Sections/0-Abstract.tex}
\input{Sections/1-Introduction.tex}

\input{Sections/7-Related_Work}

\input{Sections/2_1-Prelim}
\input{Sections/2-Proposed_Method.tex}

\input{Sections/3-Experimental_Setup.tex}
\input{Sections/4-Results.tex}
\input{Sections/6-Conclusion.tex}




\bibliographystyle{IEEEtran}
\bibliography{IEEEabrv,sample}

\clearpage

\end{document}

%% file: Sections/0-Abstract.tex
\begin{abstract}
Robust state estimation is central to robotic autonomy, yet classical Kalman filters struggle with frequency-dependent disturbances and model mismatch such as sensor vibrations, electromagnetic interference, and periodic noise.
Although Deep Kalman Filter (DKF) variants extend the Extended Kalman Filtering (EKF) framework by learning latent transitions, they lack explicit mechanisms to suppress band-limited noise components that typically corrupt sensor measurements in real-world scenarios.
We introduce the Frequency-Weighted Neural Kalman Filter (FW-NKF), a unified hybrid approach that embeds a causal spectral-shaping operator into the Kalman measurement residual and jointly learns observation, and transition networks.
By adapting both the filter spectrum and the latent state representation, FW-NKF attenuates the noise-dominated frequency bands while capturing complex residual structures.
We conduct extensive experiments on four heterogeneous benchmarks, including chaotic systems such as multi-dimensional Lorenz systems and full-body inertial pose estimation, and find a reduction in localization error of up to 10\% as well as marked improvements in orientation accuracy.
Our ablation studies confirm that frequency weighting and deep latent-state modeling contribute to overall performance.
\end{abstract}

%% file: Sections/1-Introduction.tex
\section{Introduction}
Real-world sensor measurements, including drone inertial measurement units (IMUs), wheel encoders, and wearable IMUs, are often corrupted by band-specific disturbances such as high-frequency vibration jitter, low-frequency bias drift, and periodic electromagnetic interference \cite{haarnoja2018composable}.
Downstream tasks such as trajectory planning, flight stabilization, and avatar animation in VR/AR demand state estimates that are both accurate and temporally coherent~\cite{yi2021transpose,gong2019spiralnet}, yet even moderate spectral contamination can degrade quality~\cite{rempe2021humor}.

State estimation often relies on noisy IMU data, where integration drift causes small biases in acceleration or angular rate to accumulate over time.
These small biases can quickly become large position or orientation errors, especially when external corrections such as GPS or visual markers are unavailable~\cite{chen2018ionet}.
For example, in ground vehicles, this drift can result in missed turns or navigation failures.
For micro-drones operating with tight agility constraints, even minor pose errors can destabilize flight, preventing successful obstacle avoidance or trajectory tracking.
In human motion capture, drift in IMU-based pose estimation causes joint angle inconsistencies and body posture deviations, degrading the realism and accuracy of avatar representations in virtual reality~\cite{zhou2022toch}.

The classical Kalman filter attains minimum mean-squared error (MMSE) optimality only when process and measurement noise are white, Gaussian, and temporally uncorrelated~\cite{Krishnan2015,krishnan2017structured,jiang2017learning}—conditions famously called ``fairy tales for undergraduates''~\cite{thomson_jackknife, maybeck1982stochastic}.
When noise is instead colored, periodic, or band-limited~\cite{hazan2018spectral,oconnor_vibration,man_made_noise}, the filter's fixed covariance model can neither capture nor suppress these artifacts, causing estimation errors to accumulate across time steps~\cite{gu2016deep,kloss2021how}.
High-frequency IMU jitter~\cite{Sarkka2013} and low-frequency bias drift~\cite{corenflos2021differentiable} are two canonical examples that scalar $\boldsymbol{Q}$/$\boldsymbol{R}$ tuning cannot resolve.
EKF and UKF variants~\cite{wan2000unscented} relax the linearity requirement but inherit the same spectral blind spot, as their gain computation remains agnostic to the frequency content of the innovation.

In this work, we introduce the Frequency-Weighted Neural Kalman Filter (FW-NKF), a frequency-aware extension of Kalman filtering designed for state estimation.
Instead of relying on a fixed linear observation matrix, we use a learnable neural network that maps latent states to denoised sensor measurements, supervised by ground truth states.
This network implicitly learns to reconstruct noise-free sensor signals to enable a more accurate integration of the true state trajectory.
To introduce frequency-specific error, we apply a Fourier transform to the estimated noise-free and observed signals and minimize the mean squared error in the frequency domain to encourage the filter to suppress spectral components most responsible for drift and noise.
In addition, we introduce a learnable frequency-domain filter applied to the Kalman innovation term to selectively attenuate disruptive frequency bands during the correction step.
By integrating frequency awareness into both the observation model and update mechanism, FW-NKF achieves improved state estimation accuracy across diverse tasks while maintaining the computational efficiency compared to the non-linear filters.
Our contributions are:
\begin{itemize}[leftmargin=*]
    \item  A hybrid filtering framework that embeds a causal, frequency-weighted operator within a Kalman filter to learn and attenuate system-specific noise concentrated in particular frequency bands.
    \item A spectral-domain training objective that minimizes a frequency-weighted error between predicted and denoised sensor signals to suppress noise-dominated frequency components. This is motivated by the Cramér–Rao lower bound, since reducing effective measurement noise can lower the minimum achievable estimator variance under standard regularity assumptions.
    \item Extensive evaluation on four heterogeneous benchmarks---pendulum, Lorenz chaotic systems, a micro aerial vehicle (MAV) and human pose estimation---achieving up to 10\% reduction in state estimation while maintaining a competitive computational overhead.
\end{itemize}


%% file: Sections/7-Related_Work.tex
\section{Related Work}
\paragraph{Classical and Adaptive Kalman Filtering}
The Kalman filter~\cite{Kalman1960} remains the standard recursive estimator for linear Gaussian systems owing to its closed-form predict–update cycle and minimum-variance guarantees~\cite{Welch1995,Sarkka2013}.
When dynamics are nonlinear, the Extended Kalman Filter (EKF)~\cite{robust_extended_KF} linearizes the current estimate, while the Unscented Kalman Filter (UKF)~\cite{wan2000unscented} propagates deterministic sigma points to capture second-order statistics~\cite{Konatowski_Kaniewski_Matuszewski_2016,freirich2023perceptualkalmanfiltersonline}.
Both families, along with innovation-based covariance adaptation and Interacting Multiple Model (IMM) switching~\cite{BarShalom2001,Julier1997}, adjust gains to accommodate maneuvering targets or drifting noise statistics.
Nevertheless, all these variants treat measurement noise as spectrally flat, lacking a mechanism to selectively attenuate frequency-localized disturbances—the gap our work addresses.
KalmanNet~\cite{revach2022kalmannet} replaces the analytic Kalman gain with a learned GRU mapping, achieving competitive accuracy without explicit noise statistics.
Subsequent extensions add Bayesian weight posteriors for uncertainty calibration~\cite{dahan2023uncertainty} or recursive gain structures for long-horizon stability~\cite{mortada2024recursive}.
Variational state-space models~\cite{Krishnan2015,Fraccaro2017} further generalize by jointly learning nonlinear transitions and emissions, and particle-filter hybrids extend this to non-Gaussian posteriors~\cite{Sarkka2013}.
While these methods relax linearity and Gaussianity constraints, none incorporate spectral supervision or frequency-selective innovation filtering.
\paragraph{Frequency‐Weighted Filtering}
Frequency‐shaped Kalman filters enhance robustness by selectively emphasizing informative spectral bands.
By applying a causal frequency‐shaping operator to the innovation sequence, these methods minimize weighted estimation variances in targeted frequency ranges, effectively suppressing band‐limited noise~\cite{Einicke2006}.  
Designing suitable shaping functions requires prior knowledge of sensor noise or estimation of noise characteristics.
\paragraph{Application‐Specific Filtering and Localization}
Many real-world estimation problems, ranging from autonomous navigation to human motion tracking, rely on noisy sensory data prone to frequency-dependent disturbances~\cite{analog_filters_imu}.
Benchmarks UIP-DP pose tracking~\cite{Armani2024} collections demonstrate these challenges across diverse platforms, environments, and sensor modalities~\cite{Carlevaris-Bianco2016}.
While application-specific methods and task-oriented neural networks have been developed to address domain-specific dynamics~\cite{7152323}, they often overlook the structured nature of sensor noise, particularly its spectral characteristics.
Our frequency-aware filtering approach can complement these existing methods by explicitly attenuating noise-dominated frequency components, improving state estimation accuracy without handcrafted filters.


%% file: Sections/2_1-Prelim.tex
\section{Preliminaries}
\subsection{Notations}
We use lowercase letters (\( x \)) to denote scalar quantities, and bold lowercase letters (\( \boldsymbol{{x}} \)) to represent vectors, such as temporal sequences, while bold uppercase letters (\( \boldsymbol{X} \)) are used for matrices.
The discrete Fourier transform is denoted as $\mathcal{F}(.)$, yielding a complex variable as $\mathcal{F}_{\boldsymbol{x}}$.
\subsection{Linear Gaussian State-Space Models}
Our framework builds on the discrete-time linear Gaussian state-space model~\cite{Kalman1960,Welch1995}:
\begin{align}
\boldsymbol{x}_k &= \boldsymbol{F}_k\,\boldsymbol{x}_{k-1} + \boldsymbol{B}_k\,\boldsymbol{u}_k + \boldsymbol{w}_k,\quad &\boldsymbol{w}_k \sim \mathcal{N}(0,\boldsymbol{Q}_k), \label{eq:ssm_state}\\
\boldsymbol{y}_k &= \boldsymbol{H}_k\,\boldsymbol{x}_k + \boldsymbol{v}_k,\quad &\boldsymbol{v}_k \sim \mathcal{N}(0,\boldsymbol{R}_k), \label{eq:ssm_measure}
\end{align}
with state $\boldsymbol{x}_k{\in}\mathbb{R}^n$, control $\boldsymbol{u}_k{\in}\mathbb{R}^p$, and observation $\boldsymbol{y}_k{\in}\mathbb{R}^m$.
$\boldsymbol{F}_k{\in}\mathbb{R}^{n\times n}$ and $\boldsymbol{B}_k{\in}\mathbb{R}^{n\times p}$ govern state evolution, $\boldsymbol{Q}_k{\in}\mathbb{R}^{n\times n}$ models process uncertainty, $\boldsymbol{H}_k{\in}\mathbb{R}^{m\times n}$ projects states onto measurements, and $\boldsymbol{R}_k{\in}\mathbb{R}^{m\times m}$ encodes sensor noise~\cite{Sarkka2013}.
\subsection{Classical Kalman Filter}
Given measurements $\boldsymbol{y}_{1:t}$, the Kalman filter recursively computes the posterior $\boldsymbol{x}_t$ that minimizes the mean-squared estimation error under the model above~\cite{Kalman1960,Welch1995}:
\paragraph{Prediction Step}
\begin{align} \label{eq:kf_predict_x}
\hat{\boldsymbol{x}}_{t|t-1} = \boldsymbol{F} \hat{\boldsymbol{x}}_{t-1|t-1} + \boldsymbol{B} \boldsymbol{u}_t, \\ \boldsymbol{P}_{t|t-1} = \boldsymbol{F} \boldsymbol{P}_{t-1|t-1} \boldsymbol{F}^\top + \boldsymbol{Q}.
\end{align}
\paragraph{Update Step}
\begin{align}
\Delta \boldsymbol{y}_t &= \boldsymbol{y}_t - \boldsymbol{H}\hat{\boldsymbol{x}}_{t|t-1}, \label{eq:kf_innov}\\
\boldsymbol{K}_t &= \boldsymbol{P}_{t|t-1} \boldsymbol{H}^\top\bigl(\boldsymbol{H} \boldsymbol{P}_{t|t-1} \boldsymbol{H}^\top + \boldsymbol{R}\bigr)^{-1}, \label{eq:kf_gain}\\
\hat{\boldsymbol{x}}_{t|t} &= \hat{\boldsymbol{x}}_{t|t-1} + \boldsymbol{K}_t\,\Delta \boldsymbol{y}_t, \label{eq:kf_update_x}
\hspace{1mm}
\boldsymbol{P}_{t|t} = (\boldsymbol{I} - \boldsymbol{K}_t \boldsymbol{H})\,\boldsymbol{P}_{t|t-1}. 
\end{align}
The subscript $t|t{-}1$ indicates the prior (before incorporating $\boldsymbol{y}_t$) and $t|t$ the posterior.
Eqs.~\eqref{eq:kf_predict_x}--\eqref{eq:kf_update_x} alternate between propagating the state forward and correcting it via the innovation $\Delta\boldsymbol{y}_t$ scaled by the Kalman gain $\boldsymbol{K}_t$.
\subsection{Motivation}
Kalman filters assume stationary noise covariances \(Q\) and \(R\), which are suboptimal when sensor noise exhibits frequency‐dependent characteristics~\cite{Einicke2006}.  
In practice, sensor measurements contain structured disturbances such as low-frequency drift, high-frequency jitter, or periodic environmental noise, none of which are captured by the white noise assumption~\cite{Kalman1960}.
These spectral artifacts can lead to suboptimal state estimation, as the Kalman gain cannot distinguish between informative signal components and frequency-localized noise.
We propose frequency-weighted neural Kalman filter to introduce spectral awareness into both the observation model and the update step.
By combining an observation network trained with spectral-domain supervision and a causal filter applied to the innovation term, FW-NKF selectively suppresses noise-dominated frequencies while preserving task-relevant information.
This approach enables accurate state estimation in environments where classical filters fall short, particularly for sensor modalities like IMUs, where frequency-specific disturbances are common~\cite{oconnor_vibration}.


%% file: Sections/2-Proposed_Method.tex
\section{Method}
In this section, we present two key enhancements to the Kalman filter, which are a frequency-domain reconstruction loss for training (Algorithm~\ref{alg:spectral-loss}) and a learnable IIR innovation filtering mechanism (Algorithm~\ref{alg:iir}). 
These components address limitations in conventional Kalman filter implementations when dealing with systems that have distinct frequency characteristics with non-Gaussian noise.

\subsection{Incorporating Frequency Loss into the Kalman Filter}
\label{sec:frequency-loss}
The frequency-domain reconstruction loss introduces a critical improvement over traditional time-domain losses by comparing signals in the spectral domain rather than point-by-point in time.
Specifically, we minimize the difference between the magnitude spectra of predicted and ground-truth observations, obtained via the Fourier Transform.
This shifts the focus from waveform alignment to frequency content, which is often more relevant in systems affected by structured, non-Gaussian noise—such as IMU signals~\cite{analog_filters_imu, oconnor_vibration}.
We define the spectral loss during training as in Equation~\ref{eq:spectral_loss}.
\begin{align}\label{eq:spectral_loss}
\mathcal{L}_{\Phi} = \Big\lVert \lVert \mathcal{F}(\boldsymbol{\hat{y}}_t) \rVert - \lVert \mathcal{F}(\boldsymbol{\tilde{y}}_t) \rVert \Big\rVert_1
\end{align}
where $\boldsymbol{\hat{y}}_t = H(\boldsymbol{\hat{x}}_{t|t-1})$ is the predicted observation from the estimated state, and $\boldsymbol{\tilde{y}}_t = H(\boldsymbol{x}_t)$ is the reference noiseless observation derived from the ground truth. 
Although noiseless readings are not directly available, we approximate them by applying the observation model to the ground truth states during training.
Under the supervision, the observation model $H(\cdot)$ is trained to approximate noise-free observations, making $\boldsymbol{\tilde{y}}_t$ a valid target for guiding spectral reconstruction.

Unlike time-domain minimum square error, this loss is phase-invariant and robust to temporal misalignments.
The use of the $\ell_1$ norm further encourages sparsity in spectral differences, ensuring the model focuses on aligning dominant frequency components while ignoring minor discrepancies.
Algorithm~\ref{alg:spectral-loss} summarizes the implementation of incorporating the frequency loss to the original filter.
\begin{algorithm}
\caption{Integration of spectral loss into the Kalman filter training}
\begin{algorithmic}[1]
\Procedure{SpectralLoss}{$\{ \boldsymbol{x}_{t|t} \}_{t=1}^T$: filtered states, $\{ \boldsymbol{\hat{y}}_t \}_{t=1}^T$: predicted observations, $\{ \boldsymbol{x}_t \}_{t=1}^T$: true states}
    \State $\mathcal{L}_{\text{state}} \gets \frac{1}{T} \sum_{t=1}^{T} \| \hat{\boldsymbol{x}}_{t|t} - \boldsymbol{x}_t \|_2^2$ \Comment{State estimation loss}
    \State $\boldsymbol{\tilde{y}}_t \gets H(\boldsymbol{x}_t)$ \Comment{Estimated noiseless observation from ground truth state}
    \State $\mathcal{F}_{\boldsymbol{\tilde{y}}} \gets \lVert \mathcal{F} \{ \boldsymbol{\tilde{y}}_t \} \rVert$ \Comment{Magnitude spectrum of estimated noiseless observation}
    \State $\mathcal{F}_{\boldsymbol{\hat{y}}} \gets \lVert \mathcal{F} \{ \boldsymbol{\hat{y}}_t \} \rVert$ \Comment{Magnitude spectrum of predicted observation}
    \State $\mathcal{L}_{\Phi} \gets \frac{1}{\omega} \sum_{\omega=0}^{\pi/2} \lVert \mathcal{F}_{\boldsymbol{\hat{y}}} - \mathcal{F}_{\boldsymbol{\tilde{y}}} \rVert_1$ \Comment{Spectral-domain reconstruction loss}
    \State $\mathcal{L}_{\text{total}} \gets \mathcal{L}_{\text{state}} + \lambda_{\Phi} \cdot \mathcal{L}_{\Phi}$
    \State \Return $\mathcal{L}_{\text{total}}$
\EndProcedure
\end{algorithmic}
\label{alg:spectral-loss}
\end{algorithm}

The proposed frequency-domain loss extends traditional time-domain objectives by comparing the magnitude spectra of predicted and noiseless observations via the Fourier transform. 
This emphasizes spectral structure rather than point-by-point waveform differences.
It is especially useful when frequency matters more than exact time alignment, such as in IMUs with vibration and interference, and in settings where noise is approximately non-Gaussian~\cite{oconnor_vibration,analog_filters_imu}.

\subsection{Learnable IIR Innovation Filtering for Kalman Estimation}
\label{sec:learnable-iir}
We apply a learnable causal IIR filter directly to the innovation
\(\Delta \boldsymbol{y}_t\) defined in (\ref{eq:kf_innov}), where
\(\Delta \boldsymbol{y}_t = \boldsymbol{y}_t - \boldsymbol{H}\,\hat{\boldsymbol{x}}_{t|t-1}\).
Unless stated otherwise, the same filter is shared across measurement channels and acts componentwise.

\paragraph{Z-transform convention and delay.}
For a discrete vector sequence \(\boldsymbol{u}_t \in \mathbb{R}^{m}\), its one-sided Z-transform is
\(\boldsymbol{U}(z) = \sum_{t=0}^{\infty} \boldsymbol{u}_t\,z^{-t}\), with complex indeterminate \(z\).
Here \(z^{-1}\) denotes one-sample delay; more generally \(z^{-i}\) is an \(i\)-step delay.
When the same scalar IIR is applied to each channel, this is equivalent to a diagonal transfer
\(\boldsymbol{\Phi}(z) = \Phi(z)\,\boldsymbol{I}_m\).

\paragraph{Filter parameterization.}
We parameterize a causal IIR with learnable coefficients \(\{b_i\}_{i=0}^{p}\) and \(\{a_j\}_{j=1}^{q}\):
\begin{align}
\Phi(z) \;=\; \frac{B(z^{-1})}{A(z^{-1})}
\;=\; \frac{\sum_{i=0}^{p} b_i\,z^{-i}}{\,1+\sum_{j=1}^{q} a_j\,z^{-j}\,}\!.
\label{eq:iir-tf}
\end{align}

\paragraph{Time-domain realization and KF correction.}
Instead of the single-line ARMA recursion, we use the equivalent cascaded realization with an internal buffer \(\boldsymbol{s}_t \in \mathbb{R}^{m}\):
\begin{align}
\boldsymbol{s}_t \;&=\; -\sum_{j=1}^{q} a_j\,\boldsymbol{s}_{t-j} \;+\; \Delta \boldsymbol{y}_t,  \label{eq:pole_recursion} \\
\widetilde{\Delta \boldsymbol{y}}_t \;&=\; \sum_{i=0}^{p} b_i\,\boldsymbol{s}_{t-i}.  \label{eq:fir}
\end{align}
The filtered innovation \(\widetilde{\Delta \boldsymbol{y}}_t = \Phi \ast \Delta \boldsymbol{y}_t\) replaces the raw innovation in the Kalman update as in Equation~\ref{eq:kf_update_x}.
\begin{align}
\hat{\boldsymbol{x}}_{t|t} \;=\; \hat{\boldsymbol{x}}_{t|t-1} + \boldsymbol{K}_t\,\widetilde{\Delta \boldsymbol{y}}_t,
\label{eq:joseph}
\end{align}
Stability is enforced by constraining the poles of \(A(z^{-1})\) to lie strictly inside the unit circle (e.g., via a stable-root reparameterization of \(\{a_j\}\)); this preserves a well-posed recursion and, with the Joseph form, maintains positive semidefiniteness of \(\boldsymbol{P}_{t|t}\) under finite precision.

\begin{algorithm}
\caption{Adaptive Frequency-Weighted Innovation}
\begin{algorithmic}[1]
\Procedure{IIR-Innovation}{$\boldsymbol{y}_t$, $\hat{\boldsymbol{x}}_{t|t-1}$, coeffs. $\{b_i\}_{i=0}^{p}$, $\{a_j\}_{j=1}^{q}$}
    \State \textbf{Inputs:} observation \(\boldsymbol{y}_t \in \mathbb{R}^{m}\); predicted state \(\hat{\boldsymbol{x}}_{t|t-1} \in \mathbb{R}^{n}\); observation matrix \(\boldsymbol{H} \in \mathbb{R}^{m \times n}\).
    \State \textbf{Coeffs:} feedforward \(\{b_i\}_{i=0}^{p}\), feedback \(\{a_j\}_{j=1}^{q}\).
    \State \textbf{Persistent internal state:} lagged IIR states \(\{\boldsymbol{s}_{t-j}\}_{j=1}^{q}\) (initialized to \(\boldsymbol{0}\) at start).
    \State \(\Delta \boldsymbol{y}_t \gets \boldsymbol{y}_t -  \boldsymbol{H}\,\hat{\boldsymbol{x}}_{t|t-1}\) \Comment{Innovation from (\ref{eq:kf_innov})}
    \State \(\boldsymbol{s}_t \gets -\sum_{j=1}^{q} a_j\,\boldsymbol{s}_{t-j} + \Delta \boldsymbol{y}_t\) \Comment{Pole recursion, (\ref{eq:pole_recursion})}
    \State \(\widetilde{\Delta \boldsymbol{y}}_t \gets \sum_{i=0}^{p} b_i\,\boldsymbol{s}_{t-i}\) \Comment{FIR stage (\ref{eq:fir})}
    \State \textbf{State update:} shift and store \(\boldsymbol{s}_t\) as the newest lag (discard the oldest).
    \State \textbf{Output:} \(\widetilde{\Delta \boldsymbol{y}}_t\) \Comment{Use in KF: \(\hat{\boldsymbol{x}}_{t|t} = \hat{\boldsymbol{x}}_{t|t-1} + \boldsymbol{K}_t\,\widetilde{\Delta \boldsymbol{y}}_t\)}
    \State \textbf{Cost:} \(O((p+q)\,m)\) per step for componentwise filtering with a shared \(\Phi\).
\EndProcedure
\end{algorithmic}
\label{alg:iir}
\end{algorithm}
\subsection{End-to-End Integration in FW\textminus NKF}
The two components enter the forward pass at complementary points without altering the Kalman structure. During \emph{training}, the observation pathway is supervised in the spectral domain via Algorithm~\ref{alg:spectral-loss}, which compares \(\boldsymbol{\hat{y}}_t=H(\hat{\boldsymbol{x}}_{t|t-1})\) to the noiseless target \(\boldsymbol{\tilde{y}}_t=H(\boldsymbol{x}_t)\) through the loss in (\ref{eq:spectral_loss}). During \emph{inference and training}, the correction step replaces the raw innovation \(\Delta \boldsymbol{y}_t\) from (\ref{eq:kf_innov}) with the frequency-weighted innovation produced by the learnable IIR in (\ref{eq:pole_recursion})–(\ref{eq:fir}), equivalently \(\widetilde{\Delta \boldsymbol{y}}_t = [\Phi \ast \Delta \boldsymbol{y}]_t\). The time-varying dynamics and noise terms \((\boldsymbol{A}_t,\boldsymbol{b}_t,\boldsymbol{Q}_t,\boldsymbol{R}_t)\) are generated by a GRU-conditioned parameterization, while the IIR coefficients \(\{a_j\},\{b_i\}\) are learned with the stability constraint on \(A(z^{-1})\) as specified in Section~\ref{sec:learnable-iir}. When measurements are missing or unreliable, the optional mask \(\boldsymbol{m}_t\) gates the update.

\begin{algorithm}
\caption{FW-NKF Prediction \& Update (forward step)}
\begin{algorithmic}[1]
\Procedure{Forward Step}{$\boldsymbol{y}_t$: observation, $\boldsymbol{m}_t$: mask (optional), $\boldsymbol{u}_t$: control (optional)}
    \State \emph{Given cached} $\hat{\boldsymbol{x}}_{t-1}$, $\boldsymbol{P}_{t-1}$, encoder hidden $\boldsymbol{h}_{t-1}$, IIR lags $\{\boldsymbol{s}_{t-j}\}_{j=1}^{q}$, fixed $\boldsymbol{H}$
    \State $\boldsymbol{z}_t \gets [\hat{\boldsymbol{x}}_{t-1}, \boldsymbol{y}_t]$ \Comment{Append $\boldsymbol{u}_t$ if present}
    \State $(\boldsymbol{h}_t) \gets \mathrm{GRU}(\boldsymbol{z}_t, \boldsymbol{h}_{t-1})$
    \State $\boldsymbol{A}_t \gets \boldsymbol{I} + \Delta\boldsymbol{A}_t(\boldsymbol{h}_t)$ \Comment{Low-rank or diagonal gate}
    \State $\boldsymbol{b}_t \gets f_b(\boldsymbol{h}_t)$; \quad $\boldsymbol{Q}_t \gets f_Q(\boldsymbol{h}_t)$; \quad $\boldsymbol{R}_t \gets f_R(\boldsymbol{h}_t)$
    \State $\hat{\boldsymbol{x}}_{t|t-1} \gets \boldsymbol{A}_t \hat{\boldsymbol{x}}_{t-1} + \boldsymbol{b}_t$
    \State $\boldsymbol{P}_{t|t-1} \gets \boldsymbol{A}_t \boldsymbol{P}_{t-1} \boldsymbol{A}_t^\top + \boldsymbol{Q}_t$
    \State $\boldsymbol{S}_t \gets \boldsymbol{H}\boldsymbol{P}_{t|t-1}\boldsymbol{H}^\top + \boldsymbol{R}_t$
    \State $\boldsymbol{K}_t \gets \boldsymbol{P}_{t|t-1}\boldsymbol{H}^\top \boldsymbol{S}_t^{-1}$ \Comment{Cholesky-inverse}
    \State $\Delta \boldsymbol{y}_t \gets \boldsymbol{y}_t - \boldsymbol{H}\hat{\boldsymbol{x}}_{t|t-1}$ \Comment{Innovation}
    \State $\boldsymbol{s}_t \gets -\sum_{j=1}^{q} a_j\,\boldsymbol{s}_{t-j} + \Delta \boldsymbol{y}_t$ \Comment{IIR all-pole stage}
    \State $\widetilde{\Delta \boldsymbol{y}}_t \gets \sum_{i=0}^{p} b_i\,\boldsymbol{s}_{t-i}$ \Comment{IIR FIR stage}
    \State $\hat{\boldsymbol{x}}_{t|t}' \gets \hat{\boldsymbol{x}}_{t|t-1} + \boldsymbol{K}_t \widetilde{\Delta \boldsymbol{y}}_t$
    \State $\boldsymbol{P}_{t|t}' \gets (\boldsymbol{I} - \boldsymbol{K}_t \boldsymbol{H})\,\boldsymbol{P}_{t|t-1}\,(\boldsymbol{I} - \boldsymbol{K}_t \boldsymbol{H})^\top + \boldsymbol{K}_t \boldsymbol{R}_t \boldsymbol{K}_t^\top$ \Comment{Joseph form}
    \State $\hat{\boldsymbol{x}}_{t|t} \gets \boldsymbol{m}_t \hat{\boldsymbol{x}}_{t|t}' + (1-\boldsymbol{m}_t)\hat{\boldsymbol{x}}_{t|t-1}$ \Comment{Mask update}
    \State $\boldsymbol{P}_{t|t} \gets \boldsymbol{m}_t \boldsymbol{P}_{t|t}' + (1-\boldsymbol{m}_t)\boldsymbol{P}_{t|t-1}$
    \State update lags $\{\boldsymbol{s}_{t-j}\} \gets$ shift-in $\boldsymbol{s}_t$ and discard oldest \Comment{Roll IIR state}
    \State cache $\hat{\boldsymbol{x}}_{t} \gets \hat{\boldsymbol{x}}_{t|t}$, $\boldsymbol{P}_{t} \gets \boldsymbol{P}_{t|t}$, $\boldsymbol{h}_t$ \Comment{For the next step}
    \State \Return $\hat{\boldsymbol{x}}_{t|t}$, $\boldsymbol{P}_{t|t}$, $\boldsymbol{h}_t$, $\boldsymbol{s}_t$
\EndProcedure
\end{algorithmic}
\end{algorithm}




\begin{figure*}[t]
    \centering
    \includegraphics[width=0.8\linewidth]{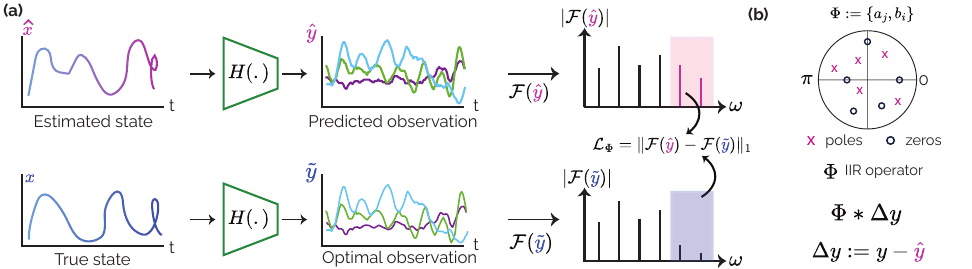}
    \caption{Overview of our method with spectral supervision.
    \textbf{(a)} The observation model $H(\cdot)$ predicts sensor measurements in two ways: one from the ground truth state $\boldsymbol{x}$ (supervised) and one from the estimated state $\boldsymbol{\tilde{x}}$. 
    This dual prediction enables to explicitly guide the observation model in reconstructing noise-free sensor signals, encouraging frequency-selective denoising.
    \textbf{(b)} The pole-zero plot of the learnable IIR filter $\Phi$, parameterized by coefficients $\{a_j, b_j\}$, which are optimized during training.
    This filter is applied causally to the innovation term (the difference between observed sensor data $\boldsymbol{y}$ and predicted observation $\boldsymbol{\hat{y}}$), selectively attenuating frequency components associated with measurement noise before the Kalman update.
}
    \label{fig:overall}
\end{figure*}
Proposition~\ref{prop1} formalizes a theoretical justification for our method.
It states that spectral filtering can improve state estimation when it suppresses noise in frequency bands that are weakly informative about the state, without removing state-dependent content.

In the measurement model $\boldsymbol{y}=h(\boldsymbol{x})+\boldsymbol{v}$, the noise $\boldsymbol{v}$ is often structured and frequency-dependent.
Applying a (causal) linear filter $\Phi$ yields the transformed observation $\boldsymbol{\tilde y}=\Phi\ast\boldsymbol{y}$, which both reshapes the measurement model and filters the noise.
When this transformation increases the information content of the observation about the state, the achievable estimation error decreases by the Cramér--Rao bound.

This result supports our approach of learning $\Phi$.
By training $\Phi$ to attenuate noise-dominated frequency components while preserving state-informative components, we improve the signal-to-noise ratio of the measurements, which is reflected in better state estimation in our experiments.
\begin{proposition}[Spectral Filtering]\label{prop1}
Let $\boldsymbol{y}=h(\boldsymbol{x})+\boldsymbol{v}$, where $\boldsymbol{x}\in\mathbb{R}^n$, $h$ is smooth, and $\boldsymbol{v}$ is zero-mean noise with covariance $\boldsymbol{R}=\mathbb{E}[\boldsymbol{v}\boldsymbol{v}^\top]$.
Let $\Phi$ be any (causal) linear filter and $\boldsymbol{\tilde y}=\Phi\,\boldsymbol{y}=\Phi\,h(\boldsymbol{x})+\tilde{\boldsymbol{v}}$ with $\tilde{\boldsymbol{v}}=\Phi\,\boldsymbol{v}$ and $\tilde{\boldsymbol{R}}=\Phi\,\boldsymbol{R}\,\Phi^\top$.
When the Fisher information in the filtered observation is no smaller than in the original observation, i.e., $I_{\tilde{\boldsymbol{y}}}(\boldsymbol{x})\succeq I_{\boldsymbol{y}}(\boldsymbol{x})$.
Then, by the Cramér--Rao bound, any unbiased estimator based on $\boldsymbol{\tilde y}$ has covariance no larger than one based on $\boldsymbol{y}$.
\end{proposition}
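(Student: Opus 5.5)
The proof is essentially a direct application of the Cramér--Rao bound. The only real work is computing the Fisher information of the filtered observation and checking that the ordering of information matrices inverts correctly.

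\textbf{Step 1: compute both Fisher information matrices.} I would assume Gaussian noise, $\boldsymbol{v}\sim\mathcal{N}(0,\boldsymbol{R})$, with $\boldsymbol{R}\succ 0$; this is the setting of the preliminaries. Then $\boldsymbol{y}\mid\boldsymbol{x}\sim\mathcal{N}(h(\boldsymbol{x}),\boldsymbol{R})$. Writing $\boldsymbol{J}(\boldsymbol{x})=\partial h/\partial\boldsymbol{x}$, the standard formula gives
\begin{equation*}
I_{\boldsymbol{y}}(\boldsymbol{x})=\boldsymbol{J}^\top\boldsymbol{R}^{-1}\boldsymbol{J}.
\end{equation*}
For the filtered observation I would treat $\Phi$ as a matrix acting on the stacked (finite-horizon) sequence, which is lower-triangular by causality. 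Then $\tilde{\boldsymbol{y}}\mid\boldsymbol{x}\sim\mathcal{N}(\Phi h(\boldsymbol{x}),\tilde{\boldsymbol{R}})$ with $\tilde{\boldsymbol{R}}=\Phi\boldsymbol{R}\Phi^\top$. The same formula gives
\begin{equation*}
I_{\tilde{\boldsymbol{y}}}(\boldsymbol{x})=\boldsymbol{J}^\top\Phi^\top\tilde{\boldsymbol{R}}^{-1}\Phi\boldsymbol{J},
\end{equation*}
provided $\tilde{\boldsymbol{R}}$ is invertible; otherwise a pseudo-inverse is needed. For non-Gaussian noise I would instead state the result abstractly: the Fisher information is defined through the score of the density of $\tilde{\boldsymbol{y}}$, and regularity conditions are assumed. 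In either case the proposition takes $I_{\tilde{\boldsymbol{y}}}\succeq I_{\boldsymbol{y}}$ as a hypothesis, so this step is only needed to make sense of the objects, not to establish the ordering.

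\textbf{Step 2: apply the Cramér--Rao bound to each observation model.} For any unbiased estimator $\hat{\boldsymbol{x}}(\tilde{\boldsymbol{y}})$ we have $\operatorname{Cov}(\hat{\boldsymbol{x}})\succeq I_{\tilde{\boldsymbol{y}}}(\boldsymbol{x})^{-1}$, and likewise $I_{\boldsymbol{y}}(\boldsymbol{x})^{-1}$ bounds estimators based on $\boldsymbol{y}$.

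\textbf{Step 3: invert the ordering.} The hypothesis is $I_{\tilde{\boldsymbol{y}}}\succeq I_{\boldsymbol{y}}\succ 0$. Since matrix inversion is operator-monotone decreasing on positive definite matrices, this gives $I_{\tilde{\boldsymbol{y}}}^{-1}\preceq I_{\boldsymbol{y}}^{-1}$. I would prove the inversion via the congruence $I_{\boldsymbol{y}}^{-1/2}I_{\tilde{\boldsymbol{y}}}I_{\boldsymbol{y}}^{-1/2}\succeq\boldsymbol{I}$, which implies its inverse is $\preceq\boldsymbol{I}$, and then conjugate back.

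\textbf{Step 4: interpret the conclusion.} The result is that the minimum achievable covariance using $\tilde{\boldsymbol{y}}$ is no larger than that using $\boldsymbol{y}$. When the bounds are attained, as for the Gaussian linearized model (or asymptotically by efficient estimators), this translates into the statement about estimator covariances.

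\textbf{Main obstacle.} The delicate point is the phrasing ``any unbiased estimator based on $\tilde{\boldsymbol{y}}$ has covariance no larger than one based on $\boldsymbol{y}$.'' Literally, Cramér--Rao only compares lower bounds, not arbitrary estimators. I would therefore prove the statement in the form ``the CRB for $\tilde{\boldsymbol{y}}$ is no larger,'' and note the equivalence with actual covariances for efficient estimators.

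A second subtlety is worth a remark. If $\Phi$ is invertible (causal with $b_0\neq 0$), then $I_{\tilde{\boldsymbol{y}}}=I_{\boldsymbol{y}}$ exactly in the Gaussian case. If $\Phi$ is not invertible, the data-processing inequality gives $I_{\tilde{\boldsymbol{y}}}\preceq I_{\boldsymbol{y}}$. So the hypothesis holds with equality or fails. It becomes a genuine improvement only relative to a misspecified model; for example, when the Kalman filter uses a nominal white $\boldsymbol{R}$ rather than the true colored covariance.
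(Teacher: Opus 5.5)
Your proposal is correct and matches the paper's proof: apply the Cramér--Rao bound to each observation model, then invert the Loewner ordering $I_{\tilde{\boldsymbol{y}}}\succeq I_{\boldsymbol{y}}\Rightarrow I_{\tilde{\boldsymbol{y}}}^{-1}\preceq I_{\boldsymbol{y}}^{-1}$. Like you, the paper concludes only an ordering of the covariance lower bounds, and your Step~1 computations and data-processing remark are extra context rather than needed steps, though the remark is accurate (the hypothesis can hold only with equality in a correctly specified model).
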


\begin{proof}
The Cramér--Rao bound states that for any unbiased estimator $\hat{\boldsymbol{x}}$ based on an observation $\boldsymbol{z}$, 
$\mathrm{Cov}(\hat{\boldsymbol{x}})\succeq I_{\boldsymbol{z}}(\boldsymbol{x})^{-1}$ (under standard regularity conditions). If $I_{\tilde{\boldsymbol{y}}}(\boldsymbol{x})\succeq I_{\boldsymbol{y}}(\boldsymbol{x})$, then $I_{\tilde{\boldsymbol{y}}}(\boldsymbol{x})^{-1}\preceq I_{\boldsymbol{y}}(\boldsymbol{x})^{-1}$, which yields the stated ordering on the covariance lower bounds. $\square$
\end{proof}


%% file: Sections/3-Experimental_Setup.tex
\section{Experiments}
We conducted experiments on four datasets spanning distinct applications to demonstrate the versatility of our method. Below, we provide a brief overview of each dataset.

\subsection{Datasets}
\subsubsection{Chaotic systems} 
\paragraph{Lorenz}
We simulate the Lorenz–63 system
$$
\nabla{\boldsymbol{x}}=\sigma(\boldsymbol{y}-\boldsymbol{x}),\quad
\nabla{\boldsymbol{y}}=\boldsymbol{x}(\rho-\boldsymbol{z})-\boldsymbol{y},\quad
\nabla{\boldsymbol{z}}=\boldsymbol{x}\boldsymbol{y}-\beta \boldsymbol{z}
$$
with the standard parameters $\sigma{=}10$, $\rho{=}28$, $\beta{=}8/3$.
Each trajectory has length $T{=}100$ steps with $\Delta t{=}0.01$\,s (total 1\,s). Initial states are drawn i.i.d.\ from $\mathcal N(0,5^2 I_3)$. We integrate the continuous dynamics using \texttt{odeint} and form latent states $\boldsymbol{x}_t\in\mathbb{R}^3$.
To stress the filter, we add small process perturbations to the \emph{simulated} path before sensing:
\begin{equation}
\boldsymbol{x}^{\text{proc}}_t=\boldsymbol{x}_t+\boldsymbol{\epsilon}_t, \qquad \boldsymbol{\epsilon}_t\sim\mathcal N(0,\,0.1^2 \boldsymbol{I}_3). 
\end{equation}
Observations are either partial or full. (i) \textbf{Partial:} $\boldsymbol{y}_t=[\boldsymbol{x}^{\text{proc}}_{t,1},\boldsymbol{x}^{\text{proc}}_{t,2}]^\top + \boldsymbol{\eta}_t\in\mathbb{R}^2$; (ii) \textbf{full:} $\boldsymbol{y}_t=\boldsymbol{x}^{\text{proc}}_t+\boldsymbol{\eta}_t\in\mathbb{R}^3$, with $\boldsymbol{\eta}_t\sim\mathcal N(0,\,0.5^2 \boldsymbol{I})$.
We report results for the partial setting (harder).
For linear-filter baselines, we use fixed discrete approximation of $F$ and $H$ (linearized around an equilibrium) as in Equation~\ref{eq:F_H}.
\begin{subequations}\label{eq:F_H}
\begin{align}
\boldsymbol{F} &= \boldsymbol{I}_3 + \Delta t\!
\begin{bmatrix}
-\sigma & \sigma & 0\\
\rho & -1 & 0\\
0 & 0 & -\beta
\end{bmatrix},\\
\boldsymbol{H}_{\text{partial}}&=\begin{bmatrix}1&0&0\\0&1&0\end{bmatrix},\quad
\boldsymbol{H}_{\text{full}}= \boldsymbol{I}_3.
\end{align}
\end{subequations}
We generate $N{=}1000$ trajectories and use an $80/10/10$ split (train/val/test) with fixed seeds for reproducibility.

\paragraph{Pendulum.}
We consider a damped, forced nonlinear pendulum with state $\boldsymbol{x}_t=[\theta_t,\omega_t]^\top$ and dynamics
\begin{equation}
\nabla\theta=\omega,\qquad
\nabla\omega=-(g/L)\sin\theta-(b/m)\,\omega + u_t/m,
\end{equation}
with $g{=}9.81$, $L{=}1.0$, $b{=}0.1$, $m{=}1.0$.
We discretize via forward Euler at $\Delta t{=}0.02$\,s for $T{=}100$ steps (2\,s). Controls are zero-mean Gaussian $u_t\sim \mathcal N(0,\;0.5^2)$. Process noise is injected additively after the Euler step: 
\begin{equation}
\boldsymbol{x}_{t+1}=\boldsymbol{x}_t+\Delta t\,f(\boldsymbol{x}_t,u_t)+\boldsymbol{\epsilon}_t, \qquad \boldsymbol{\epsilon}_t\sim\mathcal N(0,\,0.05^2 \boldsymbol{I}_2); 
\end{equation}
angles are wrapped to $(-\pi,\pi]$. Observations are Cartesian tip positions with Gaussian noise:
\begin{equation}
\boldsymbol{y}_t=\begin{bmatrix}L\sin\theta_t\\ -L\cos\theta_t\end{bmatrix}+\boldsymbol{\eta}_t,\qquad
\boldsymbol{y}_t\sim\mathcal N(0,\,0.1^2 \boldsymbol{I}_2).
\end{equation}
For methods~\cite{revach2022kalmannet},~\cite{dahan2023uncertainty} that require analytic solutions, we use the approximate analytic Jacobians implemented as,
\begin{equation}
\boldsymbol{F} = \begin{bmatrix}
1-\sigma\Delta t & \sigma\Delta t & 0\\
\rho\Delta t & 1-\Delta t & 0\\
0 & 0 & 1-\beta\Delta t
\end{bmatrix}
\end{equation}
\begin{equation}
\boldsymbol{H}_{\text{partial}}=\begin{bmatrix}1&0&0\\0&1&0\end{bmatrix},
\quad
\boldsymbol{H}_{\text{full}}=I_3.
\end{equation}
with $\bar\theta$ the linearization point. We generate $N{=}30000$ trajectories and use the same $80/10/10$ split.

\smallskip
\noindent\textit{Usage in our Kalman models.}
In both datasets the training target is the clean latent state $\boldsymbol{x}_t$ (pre-noise Lorenz states and post-process-noise pendulum states as returned by the generator). Measurement models and linearization matrices are provided to classical filters; learned Kalman variants receive the same observations and, where specified, learn adaptive $\boldsymbol{Q}_t$, $\boldsymbol{R}_t$ (and, for ablations, frequency-aware variants). All hyperparameters above exactly match the released defaults.

\subsubsection{Real-world}
\paragraph{UIP-DB}
UIP-DB contains 200 minutes of synchronized IMU and UWB range data from six wearable sensors alongside optical motion-capture ground truth across 25 motion types performed by 10 participants, designed for full-body inertial pose estimation with ranging constraints~\cite{Armani2024}.

The UIP-DB provides a challenging and realistic benchmark for Kalman filtering.
Full-body pose estimation from wearable IMUs is subject to drift due to sensor bias, while UWB ranging measurements are corrupted by high-frequency multipath effects.
Kalman filters are a natural fit for this setting, as they fuse heterogeneous measurements with noise statistics into a coherent state estimate. 

Importantly, the complementary characteristics of IMU and UWB measurements motivate frequency-aware modeling.
Low-frequency content often reflects slow pose evolution, while higher-frequency content captures rapid motion and noise.
A frequency-weighted Kalman approach adjusts the filter’s trust across bands, down-weighting noise-dominated components while preserving fast transitions.

\paragraph{EuRoC MAV}
The EuRoC MAV dataset~\cite{Burri2016} provides synchronized IMU measurements at 200\,Hz and motion-capture ground truth from a hexacopter flying in indoor environments.
We use 10 sequences (MH\_02--05, V1\_01--03, V2\_01--03) and formulate an IMU-based state estimation task where the observation is the 6D IMU (3-axis gyroscope $+$ 3-axis accelerometer), $\boldsymbol{y}_t\in\mathbb{R}^6$, and the state comprises position, velocity, and quaternion orientation, $\boldsymbol{x}_t\in\mathbb{R}^{10}$.
We segment each sequence into overlapping windows of length $T{=}200$ with stride 50, yielding 4{,}266 samples, and apply an $80/10/10$ train/val/test split.
For all Kalman filter variants, the observation model is a learned linear mapping $\boldsymbol{H}\in\mathbb{R}^{6\times 10}$.
\subsection{Evaluation Metrics}
We evaluate our method across diverse tasks, focusing on state estimation accuracy.
For all tasks, we report the state estimation error in $\ell_1$ and $\ell_2$ term.
The primary metric is the root-mean-square error (RMSE) between the estimated $\boldsymbol{\hat{x}}_t$ and the ground truth state $\boldsymbol{x}_t$, computed as in Equation~\ref{eq:metrics}.
\begin{equation}\label{eq:metrics}
\begin{aligned}
\text{MSE} &= \frac{1}{T}\sum_t \|\boldsymbol{e}_t\|_2^2, \qquad
\text{RMSE} = \sqrt{\text{MSE}}, \\
\text{MAE} &= \frac{1}{T}\sum_t \|\boldsymbol{e}_t\|_1, \qquad
\text{NRMSE} = \frac{\text{RMSE}}{\sqrt{\frac{1}{T}\sum_t \|\boldsymbol{x}_t\|_2^2}},
\end{aligned}
\end{equation}
where $\boldsymbol{e}_t \triangleq \hat{\boldsymbol{x}}_t - \boldsymbol{x}_t$.

In addition to per-frame errors, we also report the Average Trajectory Error (ATE), which measures the average Euclidean distance between the estimated trajectory and the ground truth trajectory over time. To capture worst-case deviations, we further report the maximum ATE, defined as the largest pointwise trajectory error observed during the sequence.

All methods are evaluated using the same ground truth references and synchronized time frames.
Standard deviations across random seeds are omitted from the tables when they fall below a significance threshold, indicating negligible variance.

\begin{figure}[t]
    \centering
    \includegraphics[width=\linewidth]{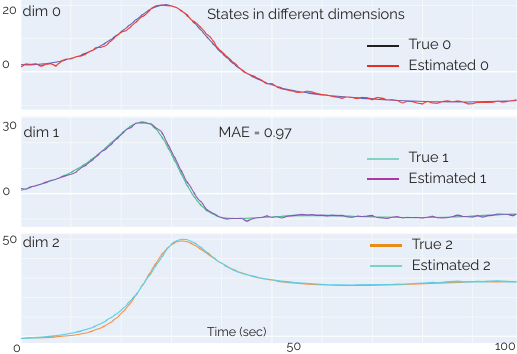}
    \caption{The state estimation performance of our method illustrated in 3-dimensional (x, y, z) Lorenz system where the estimations follow the true state in all dimensions with low mean absolute error. 
    All y-axes are dimensionless.
    }
    \label{fig:lorenz_ours}
\end{figure}

\begin{table}[t]
\caption{Performance comparison of different Kalman filtering methods on Pendulum dataset with varying weights.}
\centering
\begin{adjustbox}{max width=\columnwidth}
\label{tab:pendulum_results}
\renewcommand{\arraystretch}{1.1}
\begin{tabular}{@{}lllllll@{}}
\toprule
\multirow{2}{*}{Method} & \multirow{2}{*}{$\lambda_\Phi$} & \multicolumn{5}{c}{Performance Metrics} \\
\cmidrule(r{10pt}){3-7}
& & MAE $\downarrow$ & MSE $\downarrow$ & RMSE $\downarrow$ & NRMSE $\downarrow$ & R² $\uparrow$ \\
\midrule
\multicolumn{2}{l}{\hspace{-2mm}\textit{Deep Learning Methods}} & & & & & \\
FW-NKF (ours) & 0.0 & 0.75\footnotesize$\pm$0.08 & 1.24\footnotesize$\pm$0.28 & 1.10\footnotesize$\pm$0.12 & 0.51\footnotesize$\pm$0.05 & 0.74\footnotesize$\pm$0.05 \\
FW-NKF (ours) & 0.01 & \textbf{0.70\footnotesize$\pm$0.07} & \textbf{1.10\footnotesize$\pm$0.23} & \textbf{1.04\footnotesize$\pm$0.10} & \textbf{0.48\footnotesize$\pm$0.05} & \textbf{0.77\footnotesize$\pm$0.04} \\
FW-NKF (ours) & 0.1 & 0.99\footnotesize$\pm$0.14 & 2.03\footnotesize$\pm$0.63 & 1.42\footnotesize$\pm$0.21 & 0.65\footnotesize$\pm$0.09 & 0.59\footnotesize$\pm$0.12 \\
\midrule
BayesKNet~\cite{dahan2023uncertainty} & 0.0 & 1.19\footnotesize$\pm$0.06 & 2.93\footnotesize$\pm$0.32 & 1.71\footnotesize$\pm$0.09 & 0.79\footnotesize$\pm$0.04 & 0.40\footnotesize$\pm$0.06 \\
BayesKNet~\cite{dahan2023uncertainty} & 0.01 & 1.37\footnotesize$\pm$0.07 & 3.88\footnotesize$\pm$0.40 & 1.97\footnotesize$\pm$0.10 & 0.91\footnotesize$\pm$0.04 & 0.21\footnotesize$\pm$0.08 \\
BayesKNet~\cite{dahan2023uncertainty} & 0.1 & 1.38\footnotesize$\pm$0.07 & 3.96\footnotesize$\pm$0.41 & 1.99\footnotesize$\pm$0.10 & 0.92\footnotesize$\pm$0.04 & 0.20\footnotesize$\pm$0.08 \\
\midrule
Recursive KNet~\cite{mortada2024recursive} & 0.0 & 1.03\footnotesize$\pm$0.08 & 2.22\footnotesize$\pm$0.38 & 1.48\footnotesize$\pm$0.12 & 0.68\footnotesize$\pm$0.05 & 0.55\footnotesize$\pm$0.07 \\
Recursive KNet~\cite{mortada2024recursive} & 0.01 & \underline{0.98\footnotesize$\pm$0.08} & \underline{2.00\footnotesize$\pm$0.35} & \underline{1.41\footnotesize$\pm$0.12} & \underline{0.65\footnotesize$\pm$0.05} & \underline{0.59\footnotesize$\pm$0.07} \\
Recursive KNet~\cite{mortada2024recursive} & 0.1 & 1.10\footnotesize$\pm$0.10 & 2.52\footnotesize$\pm$0.48 & 1.58\footnotesize$\pm$0.15 & 0.73\footnotesize$\pm$0.06 & 0.49\footnotesize$\pm$0.09 \\
\midrule
Kalman Net~\cite{revach2022kalmannet} & 0.0 & 1.22\footnotesize$\pm$0.06 & 3.08\footnotesize$\pm$0.34 & 1.75\footnotesize$\pm$0.09 & 0.81\footnotesize$\pm$0.04 & 0.37\footnotesize$\pm$0.06 \\
Kalman Net~\cite{revach2022kalmannet} & 0.01 & 1.26\footnotesize$\pm$0.07 & 3.33\footnotesize$\pm$0.39 & 1.82\footnotesize$\pm$0.10 & 0.84\footnotesize$\pm$0.05 & 0.32\footnotesize$\pm$0.07 \\
Kalman Net~\cite{revach2022kalmannet} & 0.1 & 1.30\footnotesize$\pm$0.07 & 3.50\footnotesize$\pm$0.42 & 1.87\footnotesize$\pm$0.11 & 0.86\footnotesize$\pm$0.05 & 0.29\footnotesize$\pm$0.08 \\
\midrule
Recurrent KNet~\cite{becker2019recurrent} & 0.0 & 1.97\footnotesize$\pm$0.12 & 8.04\footnotesize$\pm$1.05 & 2.83\footnotesize$\pm$0.18 & 1.31\footnotesize$\pm$0.08 & -0.62\footnotesize$\pm$0.21 \\
Recurrent KNet~\cite{becker2019recurrent} & 0.01 & 2.03\footnotesize$\pm$0.13 & 8.54\footnotesize$\pm$1.12 & 2.92\footnotesize$\pm$0.19 & 1.35\footnotesize$\pm$0.08 & -0.72\footnotesize$\pm$0.22 \\
Recurrent KNet~\cite{becker2019recurrent} & 0.1 & 2.10\footnotesize$\pm$0.14 & 9.15\footnotesize$\pm$1.21 & 3.02\footnotesize$\pm$0.20 & 1.40\footnotesize$\pm$0.09 & -0.84\footnotesize$\pm$0.24 \\
\midrule
\multicolumn{2}{l}{\hspace{-2mm}\textit{Classical Methods}} & & & & & \\
Classical KF~\cite{Kalman1960} & 0.0 & 1.67\footnotesize$\pm$0.10 & 5.80\footnotesize$\pm$0.73 & 2.40\footnotesize$\pm$0.15 & 1.11\footnotesize$\pm$0.07 & -0.16\footnotesize$\pm$0.14 \\
Classical KF~\cite{Kalman1960} & 0.01 & 1.86\footnotesize$\pm$0.11 & 7.20\footnotesize$\pm$0.92 & 2.68\footnotesize$\pm$0.17 & 1.24\footnotesize$\pm$0.08 & -0.45\footnotesize$\pm$0.18 \\
Classical KF~\cite{Kalman1960} & 0.1 & 1.87\footnotesize$\pm$0.11 & 7.28\footnotesize$\pm$0.93 & 2.70\footnotesize$\pm$0.17 & 1.25\footnotesize$\pm$0.08 & -0.46\footnotesize$\pm$0.18 \\
\midrule
Autoreg KF~\cite{autoregressive_kalman} & 0.0 & 1.78\footnotesize$\pm$0.09 & 5.28\footnotesize$\pm$0.46 & 2.29\footnotesize$\pm$0.10 & 1.00\footnotesize$\pm$0.00 & 0.01\footnotesize$\pm$0.01 \\
Autoreg KF~\cite{autoregressive_kalman} & 0.01 & 1.87\footnotesize$\pm$0.10 & 5.81\footnotesize$\pm$0.51 & 2.41\footnotesize$\pm$0.10 & 1.05\footnotesize$\pm$0.01 & -0.17\footnotesize$\pm$0.02 \\
Autoreg KF~\cite{autoregressive_kalman} & 0.1 & 1.96\footnotesize$\pm$0.10 & 6.40\footnotesize$\pm$0.56 & 2.53\footnotesize$\pm$0.11 & 1.10\footnotesize$\pm$0.01 & -0.29\footnotesize$\pm$0.02 \\
\bottomrule
\multicolumn{2}{l}{Gain (\%)} & \textcolor{Green}{28.57\%} & \textcolor{Green}{45.00\%} & \textcolor{Green}{26.24\%} & \textcolor{Green}{26.15\%} & \textcolor{Green}{30.51\%} \\
\end{tabular}
\end{adjustbox}
\end{table}

\subsection{Baselines}
Our framework is designed to be modular and can be integrated with a wide range of Kalman filtering approaches.
To demonstrate its versatility, we also evaluate against six representative baselines that span classical, autoregressive, and neural Kalman filters.

\textbf{BayesKNet}~\cite{dahan2023uncertainty} places variational posteriors over the weight matrices of a gain-predicting network, yielding both a state estimate and an epistemic uncertainty at each step.

\textbf{Recursive KNet}~\cite{mortada2024recursive} feeds the previous Kalman gain back into the gain-prediction network at each time step to model gain dynamics and improving stability over long horizons.

\textbf{KalmanNet}~\cite{revach2022kalmannet} parameterize the Kalman gain as a GRU that ingests the innovation and prior covariance features, sidestepping the need for known $\boldsymbol{Q}$ and $\boldsymbol{R}$.

\textbf{Recurrent KNet}~\cite{becker2019recurrent} augments the gain network with an LSTM cell whose hidden state carries information from earlier time steps, capturing long-range temporal correlations.

\textbf{Classical KF}~\cite{Kalman1960} uses the analytic predict–update equations with fixed $\boldsymbol{F}$, $\boldsymbol{H}$, $\boldsymbol{Q}$, $\boldsymbol{R}$ matrices, serving as the MMSE-optimal reference under correct model assumptions.

\textbf{Autoregressive KF}~\cite{autoregressive_kalman} replaces the first-order Markov transition with a higher-order autoregressive process model, allowing richer temporal structure without neural components.

\begin{table}[t]
\caption{Performance comparison of different Kalman filtering methods on Lorenz dataset with varying weights.}
\centering
\begin{adjustbox}{max width=\columnwidth}
\label{tab:lorenz_results}
\renewcommand{\arraystretch}{1.1}
\begin{tabular}{@{}lllllll@{}}
\toprule
\multirow{2}{*}{Method} & \multirow{2}{*}{$\lambda_\Phi$} & \multicolumn{5}{c}{Performance Metrics} \\
\cmidrule(r{10pt}){3-7}
& & MAE $\downarrow$ & MSE $\downarrow$ & RMSE $\downarrow$ & NRMSE $\downarrow$ & R² $\uparrow$ \\
\midrule
\multicolumn{2}{l}{\hspace{-2mm}\textit{Deep Learning Methods}} & & & & & \\
FW-NKF (ours) & 0.0 & 0.62\footnotesize$\pm$0.10 & 0.81\footnotesize$\pm$0.26 & 0.87\footnotesize$\pm$0.14 & 0.05\footnotesize$\pm$0.09 & 0.97\footnotesize$\pm$0.01 \\
FW-NKF (ours) & 0.01 & \textbf{0.50\footnotesize$\pm$0.15} & \textbf{0.58\footnotesize$\pm$0.36} & \textbf{0.71\footnotesize$\pm$0.22} & \textbf{0.04\footnotesize$\pm$0.01} & \textbf{0.97\footnotesize$\pm$0.01} \\
FW-NKF (ours) & 0.1 & 8.18\footnotesize$\pm$1.04 & 119.82\footnotesize$\pm$30.51 & 10.94\footnotesize$\pm$1.38 & 0.66\footnotesize$\pm$0.08 & 0.54\footnotesize$\pm$0.11 \\
\midrule
BayesKNet~\cite{dahan2023uncertainty} & 0.0 & 4.49\footnotesize$\pm$1.03 & 35.84\footnotesize$\pm$16.56 & 5.94\footnotesize$\pm$1.36 & 0.36\footnotesize$\pm$0.08 & 0.86\footnotesize$\pm$0.06 \\
BayesKNet~\cite{dahan2023uncertainty} & 0.01 & 7.51\footnotesize$\pm$1.33 & 103.09\footnotesize$\pm$36.46 & 10.11\footnotesize$\pm$1.76 & 0.61\footnotesize$\pm$0.10 & 0.61\footnotesize$\pm$0.13 \\
BayesKNet~\cite{dahan2023uncertainty} & 0.1 & 7.32\footnotesize$\pm$1.02 & 98.20\footnotesize$\pm$27.96 & 9.87\footnotesize$\pm$1.38 & 0.60\footnotesize$\pm$0.08 & 0.63\footnotesize$\pm$0.10 \\
\midrule
Recursive KNet~\cite{mortada2024recursive} & 0.0 & 5.17\footnotesize$\pm$0.34 & 49.49\footnotesize$\pm$6.65 & 7.02\footnotesize$\pm$0.47 & 0.42\footnotesize$\pm$0.02 & 0.81\footnotesize$\pm$0.02 \\
Recursive KNet~\cite{mortada2024recursive} & 0.01 & \underline{4.24\footnotesize$\pm$0.24} & \underline{33.33\footnotesize$\pm$3.77} & \underline{5.76\footnotesize$\pm$0.32} & \underline{0.35\footnotesize$\pm$0.02} & \underline{0.87\footnotesize$\pm$0.01} \\
Recursive KNet~\cite{mortada2024recursive} & 0.1 & 4.85\footnotesize$\pm$0.18 & 43.03\footnotesize$\pm$3.36 & 6.55\footnotesize$\pm$0.25 & 0.40\footnotesize$\pm$0.01 & 0.83\footnotesize$\pm$0.01 \\
\midrule
Kalman Net~\cite{revach2022kalmannet} & 0.0 & 5.14\footnotesize$\pm$0.24 & 48.70\footnotesize$\pm$4.64 & 6.97\footnotesize$\pm$0.33 & 0.42\footnotesize$\pm$0.02 & 0.81\footnotesize$\pm$0.02 \\
Kalman Net~\cite{revach2022kalmannet} & 0.01 & 6.11\footnotesize$\pm$0.23 & 68.67\footnotesize$\pm$5.16 & 8.28\footnotesize$\pm$0.31 & 0.50\footnotesize$\pm$0.01 & 0.74\footnotesize$\pm$0.02 \\
Kalman Net~\cite{revach2022kalmannet} & 0.1 & 6.65\footnotesize$\pm$0.24 & 81.33\footnotesize$\pm$6.04 & 9.01\footnotesize$\pm$0.33 & 0.55\footnotesize$\pm$0.02 & 0.69\footnotesize$\pm$0.02 \\
\midrule
Recurrent KNet~\cite{becker2019recurrent} & 0.0 & 8.61\footnotesize$\pm$0.43 & 136.08\footnotesize$\pm$13.55 & 11.66\footnotesize$\pm$0.58 & 0.71\footnotesize$\pm$0.03 & 0.48\footnotesize$\pm$0.05 \\
Recurrent KNet~\cite{becker2019recurrent} & 0.01 & 9.60\footnotesize$\pm$0.49 & 169.32\footnotesize$\pm$17.32 & 13.01\footnotesize$\pm$0.66 & 0.79\footnotesize$\pm$0.04 & 0.36\footnotesize$\pm$0.06 \\
Recurrent KNet~\cite{becker2019recurrent} & 0.1 & 13.58\footnotesize$\pm$0.60 & 338.60\footnotesize$\pm$29.87 & 18.40\footnotesize$\pm$0.81 & 1.12\footnotesize$\pm$0.05 & -0.27\footnotesize$\pm$0.11 \\
\midrule
\multicolumn{2}{l}{\hspace{-2mm}\textit{Classical Methods}} & & & & & \\
Classical KF~\cite{Kalman1960} & 0.0 & 8.43\footnotesize$\pm$0.41 & 130.61\footnotesize$\pm$12.87 & 11.43\footnotesize$\pm$0.56 & 0.69\footnotesize$\pm$0.03 & 0.50\footnotesize$\pm$0.05 \\
Classical KF~\cite{Kalman1960} & 0.01 & 8.77\footnotesize$\pm$0.42 & 141.47\footnotesize$\pm$13.70 & 11.89\footnotesize$\pm$0.57 & 0.72\footnotesize$\pm$0.03 & 0.46\footnotesize$\pm$0.05 \\
Classical KF~\cite{Kalman1960} & 0.1 & 10.11\footnotesize$\pm$0.50 & 187.82\footnotesize$\pm$18.83 & 13.70\footnotesize$\pm$0.68 & 0.83\footnotesize$\pm$0.04 & 0.29\footnotesize$\pm$0.07 \\
\midrule
Autoreg KF~\cite{autoregressive_kalman} & 0.0 & 13.20\footnotesize$\pm$1.49 & 311.83\footnotesize$\pm$27.20 & 17.64\footnotesize$\pm$0.77 & 1.07\footnotesize$\pm$0.05 & -0.16\footnotesize$\pm$0.10 \\
Autoreg KF~\cite{autoregressive_kalman} & 0.01 & 13.23\footnotesize$\pm$1.50 & 313.06\footnotesize$\pm$27.33 & 17.68\footnotesize$\pm$0.78 & 1.08\footnotesize$\pm$0.05 & -0.16\footnotesize$\pm$0.10 \\
Autoreg KF~\cite{autoregressive_kalman} & 0.1 & 13.29\footnotesize$\pm$1.51 & 315.52\footnotesize$\pm$27.60 & 17.76\footnotesize$\pm$0.78 & 1.08\footnotesize$\pm$0.05 & -0.17\footnotesize$\pm$0.10 \\
\bottomrule
\multicolumn{2}{l}{Gain (\%)} & \textcolor{Green}{88.21\%} & \textcolor{Green}{98.26\%} & \textcolor{Green}{87.67\%} & \textcolor{Green}{88.57\%} & \textcolor{Green}{11.49\%} \\
\end{tabular}
\end{adjustbox}
\end{table}

\smallskip
\noindent\textit{Application of spectral loss to baselines.}
The frequency-domain reconstruction loss $\mathcal{L}_{\Phi}$ (Eq.~\ref{eq:spectral_loss}) is a \emph{training objective}, not an architectural modification.
It operates by projecting estimated and ground-truth states through the observation model $\boldsymbol{H}$ and comparing their magnitude spectra.
This makes it applicable to any differentiable Kalman filter whose parameters are optimized via gradient descent, regardless of whether those parameters are neural network weights or classical system matrices.

Concretely, Classical~KF and Autoregressive~KF parameterize the transition matrix $\boldsymbol{F}$, observation matrix $\boldsymbol{H}$, and noise covariances $\boldsymbol{Q},\boldsymbol{R}$ as learnable tensors.
When $\lambda_{\Phi}>0$, the spectral loss provides an additional gradient signal that encourages these matrices to better preserve the frequency content of the true observations.
For neural baselines (KalmanNet, BayesKNet, Recursive~KNet, Recurrent~KNet), the same loss regularizes their parameterizations.
We vary $\lambda_{\Phi}$ across methods to ensure a consistent and fair comparison of how each strategy benefits from frequency-domain supervision.
\begin{table}[t]
\caption{Performance comparison of Kalman filtering methods for state estimation on UIP-DB using IMU and UWB range measurements.}
\centering
\begin{adjustbox}{max width=\columnwidth}
\label{tab:uip_db_results}
\renewcommand{\arraystretch}{1.1}
\begin{tabular}{@{}lllllll@{}}
\toprule
\multirow{2}{*}{Method} & \multirow{2}{*}{$\lambda_\Phi$} & \multicolumn{2}{c}{Acceleration} & \multicolumn{1}{c}{Rotation} & \multicolumn{2}{c}{Inter Sensor Distance} \\
\cmidrule(r{15pt}){3-4}
\cmidrule(r{15pt}){5-5}
\cmidrule(r{15pt}){6-7}
& & RMSE $\downarrow$ & MAE $\downarrow$ & $\rho$ $\downarrow$ & RMSE $\downarrow$ & MAE $\downarrow$ \\
\midrule
\multicolumn{2}{l}{\hspace{-2mm}\textit{Deep Learning Methods}} & & & & & \\
FW-NKF (ours) & 0.0 & 4.37\footnotesize$\pm$0.24 & 13.46\footnotesize$\pm$0.83 & 0.42\footnotesize$\pm$0.01 & 1.41\footnotesize$\pm$0.07 & 6.80\footnotesize$\pm$0.34 \\
FW-NKF (ours) & 0.01 & \textbf{4.15\footnotesize$\pm$0.20} & \textbf{12.78\footnotesize$\pm$0.66} & \textbf{0.40\footnotesize$\pm$0.01} & \textbf{1.34\footnotesize$\pm$0.09} & \textbf{6.45\footnotesize$\pm$0.42} \\
FW-NKF (ours) & 0.1 & 4.73\footnotesize$\pm$0.33 & 14.57\footnotesize$\pm$1.15 & 0.45\footnotesize$\pm$0.02 & 1.53\footnotesize$\pm$0.11 & 7.35\footnotesize$\pm$0.57 \\
FW-NKF (ours) & 1.0 & 5.90\footnotesize$\pm$0.42 & 18.18\footnotesize$\pm$1.39 & 0.56\footnotesize$\pm$0.03 & 1.90\footnotesize$\pm$0.14 & 9.16\footnotesize$\pm$0.72 \\
\midrule
Recursive KNet~\cite{mortada2024recursive} & 0.0 & 5.46\footnotesize$\pm$0.39 & 16.82\footnotesize$\pm$1.26 & 0.52\footnotesize$\pm$0.03 & 1.76\footnotesize$\pm$0.13 & 8.48\footnotesize$\pm$0.66 \\
Recursive KNet~\cite{mortada2024recursive} & 0.01 & \underline{5.18\footnotesize$\pm$0.31} & \underline{15.95\footnotesize$\pm$1.00} & \underline{0.49\footnotesize$\pm$0.02} & \underline{1.67\footnotesize$\pm$0.11} & \underline{8.04\footnotesize$\pm$0.50} \\
Recursive KNet~\cite{mortada2024recursive} & 0.1 & 5.90\footnotesize$\pm$0.42 & 18.18\footnotesize$\pm$1.39 & 0.56\footnotesize$\pm$0.03 & 1.90\footnotesize$\pm$0.14 & 9.16\footnotesize$\pm$0.72 \\
\midrule
BayesKNet~\cite{dahan2023uncertainty} & 0.0 & 6.28\footnotesize$\pm$0.45 & 19.33\footnotesize$\pm$1.45 & 0.60\footnotesize$\pm$0.04 & 2.02\footnotesize$\pm$0.15 & 9.75\footnotesize$\pm$0.76 \\
BayesKNet~\cite{dahan2023uncertainty} & 0.01 & 5.95\footnotesize$\pm$0.36 & 18.33\footnotesize$\pm$1.15 & 0.57\footnotesize$\pm$0.03 & 1.92\footnotesize$\pm$0.12 & 9.24\footnotesize$\pm$0.58 \\
BayesKNet~\cite{dahan2023uncertainty} & 0.1 & 6.78\footnotesize$\pm$0.49 & 20.89\footnotesize$\pm$1.56 & 0.65\footnotesize$\pm$0.04 & 2.19\footnotesize$\pm$0.16 & 10.53\footnotesize$\pm$0.83 \\
\midrule
Kalman Net~\cite{revach2022kalmannet} & 0.0 & 7.21\footnotesize$\pm$0.52 & 22.20\footnotesize$\pm$1.66 & 0.69\footnotesize$\pm$0.04 & 2.32\footnotesize$\pm$0.17 & 11.20\footnotesize$\pm$0.88 \\
Kalman Net~\cite{revach2022kalmannet} & 0.01 & 6.83\footnotesize$\pm$0.41 & 21.05\footnotesize$\pm$1.32 & 0.65\footnotesize$\pm$0.04 & 2.20\footnotesize$\pm$0.14 & 10.60\footnotesize$\pm$0.66 \\
Kalman Net~\cite{revach2022kalmannet} & 0.1 & 7.79\footnotesize$\pm$0.56 & 24.00\footnotesize$\pm$1.80 & 0.74\footnotesize$\pm$0.05 & 2.51\footnotesize$\pm$0.18 & 12.10\footnotesize$\pm$0.95 \\
\midrule
Recurrent KNet~\cite{becker2019recurrent} & 0.0 & 10.06\footnotesize$\pm$0.72 & 31.00\footnotesize$\pm$2.32 & 0.96\footnotesize$\pm$0.06 & 3.24\footnotesize$\pm$0.23 & 15.64\footnotesize$\pm$1.23 \\
Recurrent KNet~\cite{becker2019recurrent} & 0.01 & 9.53\footnotesize$\pm$0.57 & 29.36\footnotesize$\pm$1.84 & 0.91\footnotesize$\pm$0.05 & 3.07\footnotesize$\pm$0.20 & 14.81\footnotesize$\pm$0.93 \\
Recurrent KNet~\cite{becker2019recurrent} & 0.1 & 10.87\footnotesize$\pm$0.78 & 33.49\footnotesize$\pm$2.51 & 1.04\footnotesize$\pm$0.07 & 3.51\footnotesize$\pm$0.27 & 16.89\footnotesize$\pm$1.33 \\
\midrule
\multicolumn{2}{l}{\hspace{-2mm}\textit{Classical Methods}} & & & & & \\
Classical KF~\cite{Kalman1960} & 0.0 & 9.05\footnotesize$\pm$0.65 & 27.89\footnotesize$\pm$2.09 & 0.86\footnotesize$\pm$0.05 & 2.92\footnotesize$\pm$0.22 & 14.06\footnotesize$\pm$1.11 \\
Classical KF~\cite{Kalman1960} & 0.01 & 8.58\footnotesize$\pm$0.52 & 26.43\footnotesize$\pm$1.66 & 0.82\footnotesize$\pm$0.05 & 2.77\footnotesize$\pm$0.18 & 13.31\footnotesize$\pm$0.83 \\
Classical KF~\cite{Kalman1960} & 0.1 & 9.78\footnotesize$\pm$0.70 & 30.14\footnotesize$\pm$2.26 & 0.93\footnotesize$\pm$0.06 & 3.15\footnotesize$\pm$0.24 & 15.20\footnotesize$\pm$1.20 \\
\midrule
Autoreg KF~\cite{autoregressive_kalman} & 0.0 & 10.66\footnotesize$\pm$0.77 & 32.85\footnotesize$\pm$2.46 & 1.02\footnotesize$\pm$0.06 & 3.44\footnotesize$\pm$0.26 & 16.57\footnotesize$\pm$1.31 \\
Autoreg KF~\cite{autoregressive_kalman} & 0.01 & 10.10\footnotesize$\pm$0.61 & 31.12\footnotesize$\pm$1.95 & 0.96\footnotesize$\pm$0.06 & 3.26\footnotesize$\pm$0.21 & 15.69\footnotesize$\pm$0.98 \\
Autoreg KF~\cite{autoregressive_kalman} & 0.1 & 11.52\footnotesize$\pm$0.83 & 35.51\footnotesize$\pm$2.66 & 1.10\footnotesize$\pm$0.07 & 3.72\footnotesize$\pm$0.28 & 17.91\footnotesize$\pm$1.41 \\
\bottomrule
\multicolumn{2}{l}{Gain (\%)} & \textcolor{Green}{19.88\%} & \textcolor{Green}{19.87\%} & \textcolor{Green}{18.37\%} & \textcolor{Green}{19.76\%} & \textcolor{Green}{19.78\%} \\
\end{tabular}
\end{adjustbox}
\end{table}

%% file: Sections/4-Results.tex
\begin{table}[t]
\caption{Runtime and model size comparison. Inference measured on a single NVIDIA RTX 6000 (batch$=$1, seq$=$100, averaged over 200 runs after 50 warm-up passes).}
\centering
\label{tab:runtime}
\renewcommand{\arraystretch}{1.1}
\begin{tabular}{@{}lrr@{}}
\toprule
Method & \#Params & ms/step \\
\midrule
\multicolumn{1}{l}{\textit{Deep Learning Methods}} & & \\
FW-NKF (ours) & 607,023 & 0.686 \\
BayesKNet~\cite{dahan2023uncertainty} & 106,279 & 2.882 \\
Recursive KNet~\cite{mortada2024recursive} & 7,351 & 1.222 \\
Kalman Net~\cite{revach2022kalmannet} & 15,476 & 0.742 \\
Recurrent KNet~\cite{becker2019recurrent} & 25,421 & 0.570 \\
\midrule
\multicolumn{1}{l}{\textit{Classical Methods}} & & \\
Classical KF~\cite{Kalman1960} & 28 & 0.276 \\
Autoreg KF~\cite{autoregressive_kalman} & 61 & 0.272 \\
\bottomrule
\end{tabular}
\end{table}


\begin{table}[t]
\caption{Performance comparison of Kalman filtering methods on EuRoC MAV IMU-based state estimation.
Recursive KNet is excluded due to memory constraints ($>$47\,GB).}
\centering
\begin{adjustbox}{max width=\columnwidth}
\label{tab:euroc_results}
\renewcommand{\arraystretch}{1.1}
\begin{tabular}{@{}lllllll@{}}
\toprule
\multirow{2}{*}{Method} & \multirow{2}{*}{$\lambda_\Phi$} & \multicolumn{5}{c}{Performance Metrics} \\
\cmidrule(r{10pt}){3-7}
& & MAE $\downarrow$ & MSE $\downarrow$ & RMSE $\downarrow$ & NRMSE $\downarrow$ & R² $\uparrow$ \\
\midrule
\multicolumn{2}{l}{\hspace{-2mm}\textit{Deep Learning Methods}} & & & & & \\
FW-NKF (ours) & 0.0 & \textbf{0.12\footnotesize$\pm$0.00} & \textbf{0.03\footnotesize$\pm$0.00} & \textbf{0.19\footnotesize$\pm$0.00} & \textbf{0.11\footnotesize$\pm$0.01} & \textbf{0.99\footnotesize$\pm$0.00} \\
FW-NKF (ours) & 0.01 & 0.12\footnotesize$\pm$0.01 & 0.04\footnotesize$\pm$0.00 & 0.19\footnotesize$\pm$0.01 & 0.11\footnotesize$\pm$0.00 & 0.99\footnotesize$\pm$0.00 \\
FW-NKF (ours) & 0.1 & 0.13\footnotesize$\pm$0.01 & 0.04\footnotesize$\pm$0.00 & 0.20\footnotesize$\pm$0.01 & 0.11\footnotesize$\pm$0.00 & 0.99\footnotesize$\pm$0.00 \\
\midrule
BayesKNet~\cite{dahan2023uncertainty} & 0.0 & 0.75\footnotesize$\pm$0.04 & 2.36\footnotesize$\pm$0.36 & 1.53\footnotesize$\pm$0.11 & 0.86\footnotesize$\pm$0.01 & 0.26\footnotesize$\pm$0.02 \\
BayesKNet~\cite{dahan2023uncertainty} & 0.01 & 0.75\footnotesize$\pm$0.04 & 2.36\footnotesize$\pm$0.36 & 1.53\footnotesize$\pm$0.11 & 0.86\footnotesize$\pm$0.01 & 0.26\footnotesize$\pm$0.02 \\
BayesKNet~\cite{dahan2023uncertainty} & 0.1 & 0.75\footnotesize$\pm$0.04 & 2.36\footnotesize$\pm$0.36 & 1.53\footnotesize$\pm$0.11 & 0.86\footnotesize$\pm$0.01 & 0.26\footnotesize$\pm$0.02 \\
\midrule
Kalman Net~\cite{revach2022kalmannet} & 0.0 & \underline{0.68\footnotesize$\pm$0.03} & \underline{1.81\footnotesize$\pm$0.24} & \underline{1.34\footnotesize$\pm$0.09} & \underline{0.75\footnotesize$\pm$0.06} & \underline{0.43\footnotesize$\pm$0.09} \\
Kalman Net~\cite{revach2022kalmannet} & 0.01 & 0.68\footnotesize$\pm$0.03 & 1.81\footnotesize$\pm$0.24 & 1.34\footnotesize$\pm$0.09 & 0.75\footnotesize$\pm$0.06 & 0.43\footnotesize$\pm$0.09 \\
Kalman Net~\cite{revach2022kalmannet} & 0.1 & 0.68\footnotesize$\pm$0.03 & 1.81\footnotesize$\pm$0.24 & 1.34\footnotesize$\pm$0.09 & 0.75\footnotesize$\pm$0.06 & 0.43\footnotesize$\pm$0.09 \\
\midrule
Recurrent KNet~\cite{becker2019recurrent} & 0.0 & 0.73\footnotesize$\pm$0.02 & 2.26\footnotesize$\pm$0.29 & 1.50\footnotesize$\pm$0.09 & 0.85\footnotesize$\pm$0.01 & 0.28\footnotesize$\pm$0.02 \\
Recurrent KNet~\cite{becker2019recurrent} & 0.01 & 0.73\footnotesize$\pm$0.02 & 2.26\footnotesize$\pm$0.29 & 1.50\footnotesize$\pm$0.09 & 0.85\footnotesize$\pm$0.01 & 0.28\footnotesize$\pm$0.02 \\
Recurrent KNet~\cite{becker2019recurrent} & 0.1 & 0.73\footnotesize$\pm$0.02 & 2.26\footnotesize$\pm$0.29 & 1.50\footnotesize$\pm$0.09 & 0.85\footnotesize$\pm$0.01 & 0.28\footnotesize$\pm$0.02 \\
\midrule
\multicolumn{2}{l}{\hspace{-2mm}\textit{Classical Methods}} & & & & & \\
Classical KF~\cite{Kalman1960} & 0.0 & 0.82 & 2.42 & 1.56 & 0.92 & 0.15 \\
Classical KF~\cite{Kalman1960} & 0.01 & 0.85 & 2.13 & 1.46 & 0.87 & 0.25 \\
Classical KF~\cite{Kalman1960} & 0.1 & 1.35\footnotesize$\pm$0.70 & 6.72\footnotesize$\pm$5.46 & 2.30\footnotesize$\pm$1.19 & 1.23\footnotesize$\pm$0.58 & $-$0.86\footnotesize$\pm$1.43 \\
\bottomrule
\multicolumn{2}{l}{Gain (\%)} & \textcolor{Green}{82.35\%} & \textcolor{Green}{98.34\%} & \textcolor{Green}{85.82\%} & \textcolor{Green}{85.33\%} & \textcolor{Green}{130.23\%} \\
\end{tabular}
\end{adjustbox}
\end{table}

%% file: Sections/6-Conclusion.tex
\section{Discussion}

\subsection{Performance Analysis Across Benchmarks}

The results across Tables~\ref{tab:pendulum_results}--\ref{tab:uip_db_results} suggest that the benefits of FW-NKF depend on the spectral structure of the underlying estimation problem.
On the pendulum benchmark (Table~\ref{tab:pendulum_results}), FW-NKF achieved the lowest MAE ($0.708\pm0.08$) relative to BayesKNet ($1.193\pm0.065$), corresponding to a 40.7\% reduction in error.
This improvement is consistent with the role of the frequency-domain reconstruction loss, which reduces high-frequency fluctuations in the observed Cartesian tip position while preserving the lower-frequency dynamics that are relevant for state tracking.

The gains are even larger on the Lorenz system (Table~\ref{tab:lorenz_results}), where FW-NKF reduced MAE from $4.24\pm0.24$ for Recursive KNet to $0.50\pm0.15$.
One possible explanation is that chaotic systems are especially sensitive to measurement corruption, so suppressing noise-dominated spectral components can improve both observation quality and downstream state propagation.
More broadly, these results suggest that frequency-aware innovation modeling is particularly useful when measurement noise overlaps with nonlinear or unstable dynamics, where small observation errors can rapidly accumulate over time.

\subsection{Limitations of Baselines}
Classical KF performance degrades across datasets, consistent with violations of the white Gaussian noise assumption.
Neural Kalman baselines improve on the classical filter, but still underperform FW-NKF, suggesting that learned filtering alone is insufficient when noise is frequency-dependent.

BayesKNet is more variable across benchmarks, performing competitively on pendulum but degrading substantially on Lorenz.
This pattern suggests that variational uncertainty estimation alone does not provide enough structure to handle spectrally complex and unstable dynamics.

\subsection{Component Contribution Analysis}

\paragraph{Frequency-Domain Loss} The spectral reconstruction loss (Eq.~\ref{eq:spectral_loss}) demonstrates clear benefits across datasets.
In UIP-DB (Table~\ref{tab:uip_db_results}), FW-NKF achieves $4.15\pm0.20$ RMSE for acceleration estimation versus Recursive KNet's $5.18 \pm 0.31$—a 19.9\% improvement resulting from guiding the observation network $H(\cdot)$ toward noise-free IMU reconstruction.

\paragraph{IIR Innovation Filtering} The learnable IIR innovation filter improves noise suppression.
On UIP-DB, orientation MAE drops to $1.34 \pm 0.09$ compared to $2.77 \pm 0.18$ for the classical KF, consistent with the learned coefficients attenuating dominated noise bands before the Kalman update.

\paragraph{Frequency Weight Selection}
Performance is best at moderate frequency weights ($\lambda=0.01$--$0.05$), indicating that mild spectral guidance is sufficient.
Larger $\lambda$ values reduce performance, as seen on the pendulum benchmark where MAE increases from $0.708 \pm 0.075$ at $\lambda=0.01$ to $1.205 \pm 0.179$ at $\lambda=5.0$.
This suggests that excessive spectral emphasis can interfere with the temporal objectives of sequential state estimation.



\section{Conclusion}
We presented FW-NKF, a frequency-aware neural Kalman filtering framework that improves state estimation by shaping the innovation signal and supervising observation learning in the spectral domain. 
Across chaotic systems and real-world sensor-fusion benchmarks, the results show that explicitly modeling frequency-dependent noise can yield more accurate and consistent estimates than conventional Kalman variants.
This work highlights that state estimation should not treat all measurement error as spectrally uniform.
In real-world, disturbances often concentrate in specific bands, and exploiting that structure improves robustness in downstream tasks such as navigation, motion tracking, and control.
The fact that moderate spectral weighting also benefits multiple differentiable Kalman baselines suggests that frequency-aware objectives may provide a general direction for building filters that better match the statistics of real sensor data.
